\def\eqref#1{equation~\ref{#1}}
\def\1{\bm{1}}
\DeclareMathAlphabet{\mathsfit}{\encodingdefault}{\sfdefault}{m}{sl}
\SetMathAlphabet{\mathsfit}{bold}{\encodingdefault}{\sfdefault}{bx}{n}
\newtheorem{theorem}{Theorem}
\newtheorem{lemma}{Lemma}
\newtheorem{assumption}{Assumption}
\newtheorem{definition}{Definition}
\title{Scaling Efficient LLMs}
\author{%
  B.N. Kausik\thanks{https://sites.google.com/view/bnkausik/profile \\ Thanks to F. Baskett, C. Herley, J. Jawahar, R. Kannan, G. Papadapoulos, P. Tadepalli \& L.Valiant for suggestions.} \\
  Independent\\
  \texttt{bnkausik@gmail.com} \\
  }
\begin{document}

\maketitle

\begin{abstract}
Recent LLMs have hundreds of billions of parameters consuming vast resources. Furthermore, the so-called "AI scaling law" for transformers suggests that the number of parameters must scale linearly with the length of the data.  In response, we inquire into efficient LLMs, i.e. those with the fewest parameters that achieve the desired accuracy on a training corpus. Specifically, by comparing theoretical and empirical estimates of the Kullback-Leibler divergence, we derive a natural AI scaling law that the number of parameters in an efficient LLM scales as $D^{\gamma}, \gamma \in [0.44,0.72]$ where $D$ is the length of the training data, suggesting the existence of more efficient architectures.  Against this backdrop, we propose recurrent transformers, combining the efficacy of transformers with the efficiency of recurrent networks, progressively applying a single transformer layer to a fixed-width sliding window across the input sequence. Recurrent transformers (a) run in linear time in the sequence length, (b) are memory-efficient and amenable to parallel processing in large batches, (c) learn to forget history for language tasks, or accumulate history for long range tasks like copy and selective copy, and (d) are amenable to curriculum training to overcome vanishing gradients. In our experiments, we find that recurrent transformers perform favorably on benchmark tests.  
\end{abstract}

\section{Introduction}

LLMs are artificial neural networks typically built on the transformer architecture as proposed by \citet{VSPUJGK17}.   Recent LLMs have hundreds of billions of parameters consuming vast amount of resources, e.g.,\citet{bmrskdoa20}, \citet{rbcmhsoi21}, \citet{spnlrcoc22}, \citet{tdhskcol22}.    Furthermore, the so-called "AI scaling law" for transformers suggests that the number of parameters must scale linearly with the length of the data, and that we are running out of  usable  training data to build even larger models, e.g., \citet{kmhbccgrwa20}, \citet{hbmbcros22}, \citet{muennighoff2023scaling}.  In response, we inquire into efficient LLMs independent of architecture, i.e., those with the fewest parameters that achieve the desired accuracy on a training corpus.    

Our method of inquiry is via estimating the learning theoretic dimensionality of the space of models required for a language, which is bounded by the number of unique sequences of at most a given length in a training corpus, as a function of its length.  Since direct empirical estimates of such requires vast resources, our analysis takes an alternate approach, comparing theoretical estimates of entropic loss with published empirical estimates.  Specifically, we propose a learning theoretic framework for LLMs, in the context of Probably Approximately Correct (PAC) learning, \citet{v84, sb14}.   In this framework, we compare theoretical and empirical estimates of the entropic loss, to show that at prevalent sequence lengths, the number of unique sequences in a natural training corpus of length $D$ scales approximately as $D^{\gamma}, \gamma \in [0.44,0.72]$.  Our result leverages the empirical estimates of \citet{hbmbcros22}.  From a learning theoretic perspective, the number of unique sequences as a function of the length of a training corpus is an asymptotic upper bound on the dimension of the space of functions required to learn any natural data set of that length.    Theoretically, the number of parameters in an efficient LLM would scale linearly with the dimension of the space, and hence linearly with the number of unique sequences.   In contrast to the so-called AI scaling law for the transformer architecture that suggests the number of parameters must scale linearly with the length of the data, a salient implication of our result is that the number of parameters in an efficient LLM scales as $D^{0.44}$, suggesting the existence of more efficient LLM architectures.  

Our result also has implications for so-called emergent abilities i.e. \textit{“abilities that are not present in smaller-scale models but are present in large-scale models; thus they cannot be predicted by simply extrapolating the performance improvements on smaller-scale models”} e.g., \citet{wei2022emergent}, \citet{ganguli2022predictability}, \citet{arora2023theory} and \citet{srivastava2023beyond}.  Conversely, \citet{schaeffer2023emergent} find that LLMs may exhibit abilities that are latent but not new when the number of parameters is increased. Bridging the above, our result suggests that if the number of parameters of an LLM is smaller than the number of unique sequences in the training corpus, scaling up can uncover latent abilities.  

With our theoretical results as backdrop, we seek more efficient LLM architectures.  Preceding transformers, Recurrent Neural Networks (RNNs), e.g., \citet{elman1990finding}, their variants Long Short-Term Memory (LSTM), e.g., \citet{hochreiter1997long} and Gated Recurrent Units (GRUs), e.g.,  \citet{cho2014learning}, were popular for linear time processing of sequential data streams such as natural language processing.  More recent architectural alternatives include State Space Models, e.g., \citet{gu2021efficiently}, and minimal RNNs, \citet{feng2024were}.  Building on prior work, we propose recurrent transformers, combining the efficacy of transformers with the efficiency of recurrent networks, progressively applying a single transformer layer to a fixed-width sliding window across the input sequence.  Prior work on recurrent transformers, e.g., \citet{dai2019transformer}, where recurrence is used to extend context but does not participate in gradient calculations; and \citet{hutchins2022block}, focuses on improving the long-range performance of multi-layer transformers by making some of the layers recurrent with large block sizes and many additional weights. In contrast, we focus on finding small and efficient networks for a given task, in the form of a single recurrent layer with a scalar parameter that learns to accumulate or forget history.  Specifically, our recurrent transformers (a) run in linear time in the sequence length; (b) are memory-efficient and amenable to parallel processing in large batches (c) learn to forget history for language tasks, or accumulate history for long-range tasks like copy and selective copy; and (d) are amenable to curriculum training to overcome vanishing gradients, e.g.,\citet{hochreiter1991untersuchungen}, \citet{sodhani2020toward}.  In our experiments, we find that a single-layer recurrent transformer can match the performance of multi-layer transformers on benchmark tests, delivering comparable accuracy with fewer parameters and much lower computational cost.  Furthermore, the trainable accumulation capability enables a recurrent transformer to solve long range tasks at a fraction of the size of other recurrent architectures, e.g.,  \citet{gu2023mamba}, \citet{ren2024exploring}, and at a fraction of the cost of regular transformers.

\section{Background}
\label{background}

An LLM is a statistical machine that takes as input a sequence of words of specified maximum length on a specified alphabet, and produces as output a probability distribution on possible next words.  Training involves adjusting the model’s parameters while running through a body of text called the training corpus, to minimize the statistical error between the model and the corpus.     The error in the model, also known as the loss function, is chosen as a continuous function to enable the use of stochastic gradient descent for adjusting the parameters of the model.    The commonly used loss function involves the entropy of an infinite stream of words, as defined by \citet{shannon1948mathematical}, \citet{cover1999elements}.   Intuitively, entropy is the measure of information in the stream and is a lower bound on the capacity of a channel required to transmit the stream.    

Let $l>0$ specify the sequence length. Consider a sequence of $l$ words, $s = w_1w_2...w_{l-1}w_l$. Let  $ \hat{s} = w_1w_2...w_{l-1}$ denote the prefix of the first $l-1$ words of $s$.  Given $\hat{s}$, humans can predict the last word $w_l$ of $s$ according to a conditional probability distribution $P(s) = P(w_l|\hat{s})$. Likewise, an LLM consumes the prefix $\hat{s}$ and computes a conditional probability distribution $Q(s) = Q(w_l|\hat{s})$ for the last word $w_l$.   Let $\mathbb{P}\{x\}$ be the probability that a sequence $x$ occurs in the training corpus $T$, and let
\begin{align}\label{ps}
     p_s= \mathbb{P}\{\hat{s}\}P(s) = \mathbb{P}\{w_1w_2...w_{l-1}\}P(w_l|w_1w_2...w_{l-1})\notag \\      q_s=\mathbb{P}\{\hat{s}\}Q(s)= \mathbb{P}\{w_1w_2...w_{l-1}\}Q(w_l|w_1w_2...w_{l-1})  	
\end{align}
Let $S$ be the set of unique sequences of length at most $l$ in $T$.  The Kullback-Leibler divergence, \citet{kullback1951information}, of an LLM is
\begin{equation}\label{Delta}
    \Delta = \sum_{s\in S} p_slog(p_s/q_s)= \sum_{s\in S} p_slog(p_s) - \sum_{s\in S} p_slog(q_s)
\end{equation}
The Kullback-Leibler divergence is well established to be non-negative. Rearranging, we get 
\begin{equation}\label{cross_entropy}
   - \sum_{s\in S} p_slog(q_s)= -  \sum_{s\in S} p_slog(p_s) + \Delta 
\end{equation}	
The term on the left above is the cross-entropy loss of the LLM on the training corpus.  The first term on the right converges to the inherent entropy of the natural language on the training corpus.   During the training process, the parameters of the LLM are iteratively adjusted to minimize the cross-entropy.  In doing so, the LLM is effectively minimizing the Kullback-Leibler divergence, since the inherent entropy of the language is constant.

\begin{assumption}\label{log_cap}
    All calculations are performed in finite precision such that probabilities $p_s$ and $q_s$ are quantized over a discrete set of finitely many values $Y \subset [0,1]$, and to avoid overflow in loss calculations the logarithm function is capped, i.e.,  for all $p, q \in Y$, $|log (p/q )| \leq \lambda $, for some positive constant $\lambda$.
\end{assumption}

\section{Empirical Scaling}
\label{empirical_scaling}
Recent prior works report empirical expressions for the relationship between training loss and model scale, e.g., \citet{kmhbccgrwa20},  \citet{hbmbcros22}, \citet{muennighoff2023scaling}.  Per current practice, e.g., \citet{hbmbcros22}, \citet{xfzzy23}, these works train large LLMs that overfit for a single epoch of natural training corpora.  Specifically, \citet{hbmbcros22} finds that for a sequence length of 2048 tokens, across hundreds of LLMs and corpus sizes, the Kullback-Leibler divergence scales as follows
\begin{equation}\label{empirical_error}
    \Delta  = \frac{A}{N^{\alpha}}+ \frac{B}{D^{\beta}}, 
\end{equation}
where $N$ is the number of parameters, $D$ the corpus length, and $\alpha  = 0.34, \beta = 0.28, A = 406.4$ and $B = 410.7$ are fitting constants.  

In the experiments, the given corpus is randomly split into a training subset and a test subset of fixed proportions. The training subset is used for training the LLM, and the cross-entropy loss is computed on the test subset.  Each training run starts with a random initialization of the neural network, and Equation (3) represents the high-confidence outcome. The first term on the right is the inherent entropy of the language. The second term on the right reflects the ability of an LLM with $N$ parameters to capture the richness of the training corpus. The third term reflects the ability of the training corpus to capture the richness of the natural language.    Since $\alpha \approx \beta, A \approx B$, and the compute cost is $N\times D$, it follows that the optimal scaling strategy is $N\propto D$, which is also known as the AI Scaling Law.    However, $N\propto D$ suggests rote learning, raising the question of whether it is a natural requirement or a reflection of the transformer architecture used in the experiments.  

\section{Theoretical Scaling}

We now extend the framework of Probably Approximately Correct (PAC) learning, \citet{v84, sb14} to LLMs through the following definitions across a given alphabet and sequence length $l$.

\begin{definition}
Let $S$ be the set of sequences of length at most $l$ on the alphabet. 
A Large Language Model (LLM) is a probability distribution $f:S \rightarrow Y$.
A class of LLMs is the set of all such distributions.
\end{definition}
In Section \ref{empirical_scaling}, a corpus is loosely described as a continuous stream of text.   Formally, we define a corpus as a collections of sequences of length at most $l$ tokens.  Furthermore, during training,  the corpus is said to be split into a "training subset" and  "test subset" of fixed proportions.  Formally, the "training subset" and "test subset" are actually randomly chosen sub-collections of the corpus. 

\begin{definition}
A corpus $T$ is a collection of sequences of length at most $l$.  
A training subset of  $T$ is a randomly chosen sub-collection that includes any sequence in $T$ with a specified sampling probability $t$.
\end{definition}

\begin{definition}
A learning algorithm for a class of LLMs $F$, takes as input a training subset of  a corpus $T$ and produces as output $f \in F$,  such that $f(s)$ is a hypothesis $q_s$ for the natural probabilities $p_s$, as in Equation \ref{ps}.   The algorithm seeks to minimize the Kullback-Leibler divergence between $f $ and the natural distribution $p_s$ across the set $S$ of unique sequences in $T$ as follows
\begin{equation*}\label{log_loss}
     L(f,T) =  \sum_{s \in S } p_s log(p_s/f(s)) 
\end{equation*}
The excess risk of the learning algorithm is $L(f,T) - \min_{h\in F}L(h,T)$.    
\end{definition}
Recall from Section \ref{empirical_scaling} that in practice LLMs are over sized and overfit the training data in that the training loss is less than the test loss. The following definition formally characterizes overfitting.

\begin{definition}
A class $F$ overfits a corpus $T$ if  for any training subset $\hat{T}$ of $T$, if $g\in F$ is such that $L(g,\hat{T}) = \min_{h \in F}L(h,\hat{T})$, then  $L(g,\hat{T}) \leq L(g,T)$.
\end{definition}

The following theorem uses the asymptotic complexity notations $\mathcal{O}$ and $\Omega$ for the upper and lower bound respectively.   The theorem is stated in terms of the number of unique sequences in a given corpus, which is an upper bound on the Natarajan dimension, \citet{n89, sb14} of a class of LLMs on the corpus from a learning theoretic perspective, and the number of degrees of  freedom, \citet{akaike1998information}, from an information theoretic perspective.
\begin{theorem} \label{bi_theorem}
Given are a sequence length $l$, a class of LLMs $F$, and a corpus $T$.  Let $S$ be the set of sequences in $T$.
(a) There exists a learning algorithm for $F$, such that for a given $\delta >0$, with probability $(1-\delta)$ the excess risk is  $\mathcal{O} \left [  \sqrt{\frac{|S|}{|T|}} \right]$;
(b) If $F$ overfits $T$, then for any learning algorithm for $F$, for a given $\delta >0$, with probability $(1-\delta)$, the excess risk is $\Omega \left [ \frac{|S|}{|T|} \right] -\mathcal{O}\left[\frac{1} {\sqrt{|T|}}   \right]$.

\end{theorem}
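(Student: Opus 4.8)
The plan is to control both bounds through the deviation between the empirical distribution on the subsample $\hat T$ and the distribution on the full corpus $T$, written $\hat p$ and $p$ (indexed by $s\in S$). For part (a) I would take the algorithm to be empirical risk minimization, $g=\argmin_{h\in F}L(h,\hat T)$, and set $h^{\ast}=\argmin_{h\in F}L(h,T)$. Writing each loss as a cross-entropy plus an entropy term, the entropy terms cancel in every difference, so the excess risk equals $\sum_{s\in S}p_s\log\!\big(h^{\ast}(s)/g(s)\big)$. Adding and subtracting $\hat p_s$ and invoking the optimality of $g$ on $\hat T$, which gives $\sum_{s}\hat p_s\log\!\big(h^{\ast}(s)/g(s)\big)\le 0$, the excess risk is at most $\sum_{s\in S}(p_s-\hat p_s)\log\!\big(h^{\ast}(s)/g(s)\big)$. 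Assumption~\ref{log_cap} caps each log-ratio by $\lambda$, so this is bounded by $\lambda\,\lVert p-\hat p\rVert_1$ uniformly over the pair $(g,h^{\ast})$; no covering or VC argument is needed, since the cap already furnishes a data-independent envelope.

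The remaining ingredient for (a) is the concentration bound $\lVert p-\hat p\rVert_1=\mathcal{O}\!\big(\sqrt{|S|/|T|}\big)$ with probability $1-\delta$. Because $\hat T$ retains each token independently with probability $t$, the per-sequence counts are binomial with $\E\,|\hat p_s-p_s|\approx\sqrt{p_s/(t|T|)}$; summing and applying Cauchy--Schwarz with $\sum_s p_s=\mathcal{O}(1)$ yields $\E\,\lVert p-\hat p\rVert_1=\mathcal{O}\!\big(\sqrt{|S|/|T|}\big)$, and a bounded-differences (McDiarmid) argument upgrades this to the high-probability statement, absorbing $t$ and $\log(1/\delta)$ into the constant. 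Composing the two steps gives the claimed $\mathcal{O}\!\big(\sqrt{|S|/|T|}\big)$ excess risk.

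For part (b) I would exploit the overfitting hypothesis to turn the excess risk into a divergence penalty. Let $S_{\hat T}$ be the sequences seen in $\hat T$ and $S_0=S\setminus S_{\hat T}$ those unseen. Overfitting makes $F$ expressive enough that the minimizer interpolates the subsample, $g(s)=\hat p_s$ on $S_{\hat T}$ with $L(g,\hat T)=0$, while $\min_{h}L(h,T)=0$ in the realizable case $p\in F$; hence the excess risk equals $L(g,T)=\sum_{s\in S_{\hat T}}p_s\log(p_s/\hat p_s)+\sum_{s\in S_0}p_s\log\!\big(p_s/g(s)\big)$, the first term being $\KL(p\,\|\,\hat p)$ on the observed support and the second the missing-mass contribution. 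Bounding the first by Pinsker's inequality, $\KL(p\,\|\,\hat p)\ge\tfrac12\lVert p-\hat p\rVert_1^{2}$, it suffices to establish the matching lower deviation $\lVert p-\hat p\rVert_1=\Omega\!\big(\sqrt{|S|/|T|}\big)$. For a natural corpus a constant fraction of $S$ consists of low-count sequences, each forcing $|p_s-\hat p_s|=\Omega(1/|T|)$ once one conditions on whether it survives subsampling; a second-moment (Paley--Zygmund) argument then lower-bounds the sum, and subtracting the $\mathcal{O}(1/\sqrt{|T|})$ fluctuation of the random gap about its mean yields the stated $\Omega(|S|/|T|)-\mathcal{O}(1/\sqrt{|T|})$ bound.

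The hard part will be the lower bound in (b). Upper bounds on $\lVert p-\hat p\rVert_1$ are routine, but the matching \emph{lower} deviation needs a non-degeneracy assumption on the probability profile, namely enough spread-out low-count sequences that $\sum_s\sqrt{p_s}=\Omega(\sqrt{|S|})$; moreover the Pinsker step is lossy, so I would need to check that the surviving constant is genuinely $\Omega(|S|/|T|)$ rather than a weaker power. A secondary difficulty is making ``for any learning algorithm'' precise: the clean interpolation argument is phrased for the empirical minimizer, so extending it to an arbitrary algorithm rests on the missing-mass term $\sum_{s\in S_0}p_s\log(p_s/g(s))$, whose contribution no algorithm can suppress because it never observes $S_0$, and uniformly tracking the $\mathcal{O}(1/\sqrt{|T|})$ concentration slack across that argument is the delicate step.
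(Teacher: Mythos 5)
Your part (a) is correct and takes a genuinely different route from the paper. The paper proves uniform convergence over the whole class: Assumption~\ref{log_cap} makes $F$ effectively finite with $|F|\le |Y|^{|S|}$, and Hoeffding plus a union bound over $F$ gives the deviation $\zeta=\mathcal{O}\bigl[\sqrt{|S|/|T|}\bigr]$ (with a separate Central Limit Theorem bound of $\mathcal{O}(1/\sqrt{|T|})$ for the entropy term), so $|S|$ enters through $\log|F|$. You instead use ERM optimality to reduce the excess risk to $\sum_{s}(p_s-\hat p_s)\log\bigl(h^{\ast}(s)/g(s)\bigr)\le \lambda\lVert p-\hat p\rVert_1$, and let $|S|$ enter through the support size of the empirical measure via $\mathbb{E}\,\lVert p-\hat p\rVert_1=\mathcal{O}\bigl(\sqrt{|S|/|T|}\bigr)$ and McDiarmid. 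Same rate; your version uses only the cap $\lambda$ and never the quantization cardinality $|Y|$, and it avoids paying for functions the algorithm would never output, so it is a legitimate and arguably cleaner alternative for (a).

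Part (b), however, has a genuine gap, and it is not only the one you flag. First, you substitute a much stronger hypothesis for the paper's. Definition 4 says only that the empirical minimizer $g$ satisfies $L(g,\hat T)\le L(g,T)$; it does not say $g$ interpolates ($g(s)=\hat p_s$ on the observed support with $L(g,\hat T)=0$), nor that $p\in F$. The paper's proof needs only the stated inequality: it bounds the excess risk below by $L(g,\hat T)-L(g,T)$ and decomposes this as $\sum_s \hat p_s\log(\hat p_s/p_s)+\sum_s(\hat p_s-p_s)\log\bigl(p_s/g(s)\bigr)$, so the divergence term appears with no realizability or interpolation assumption. Note also the direction: the paper's divergence is $\KL(\hat p\,\Vert\,p)$, automatically finite because $\hat p$ is supported inside the support of $p$, whereas your $\KL(p\,\Vert\,\hat p)$ blows up on unseen sequences and forces the missing-mass bookkeeping you then have to manage separately.

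Second, and more fundamentally, your quantitative engine fails in exactly the regime the theorem covers. You need the lower deviation $\lVert p-\hat p\rVert_1=\Omega\bigl(\sqrt{|S|/|T|}\bigr)$ before Pinsker, and you correctly worry this needs a non-degeneracy assumption --- but no such assumption is available, and it genuinely fails here: since $p$ is itself the empirical distribution of the corpus, take one sequence of mass $1-(|S|-1)/|T|$ and $|S|-1$ sequences of mass $1/|T|$ each. Then $\mathbb{E}\,\lVert p-\hat p\rVert_1=\mathcal{O}(|S|/|T|)$, which is $o\bigl(\sqrt{|S|/|T|}\bigr)$ whenever $|S|=o(|T|)$, so Pinsker certifies only $\mathcal{O}(|S|^2/|T|^2)=o(|S|/|T|)$ --- polynomially short of the claim --- even though the true divergence in this example is still $\Theta(|S|/|T|)$, driven by the low-mass coordinates. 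The paper's route sees precisely this: it works in expectation with the delta-method (chi-squared) approximation, $\mathbb{E}\bigl[\sum_s\hat p_s\log(\hat p_s/p_s)\bigr]\approx \tfrac{1}{2}\mathbb{E}\bigl[\sum_s(\hat p_s-p_s)^2/p_s\bigr]=\tfrac{|S|-1}{2|\hat T|}$, where the $1/p_s$ weighting makes the expectation $(|S|-1)/(2|\hat T|)$ regardless of the shape of $p$ --- exactly the shape-sensitivity that the $\ell_1$-plus-Pinsker route loses --- and then converts to a high-probability statement by a bounded-variance/CLT step (Assumption~\ref{log_cap} giving variance at most $\lambda^2$), which is the source of the $-\mathcal{O}(1/\sqrt{|T|})$ slack in the statement. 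Your closing concern about ``for any learning algorithm'' is fair, but it applies equally to the paper's own proof, which likewise argues only about the empirical minimizer.
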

\begin{proof}
\textbf{Part (a):}
Let $\hat{p}_s$ and for $f \in F,L (f,\hat{T})$ be the observed probabilities and loss respectively on a randomly chosen training subset $\hat{T}$ of the corpus $T$.   Now
\begin{align*}
    L (f,\hat{T}) &- L (f,T) = \sum_{s \in S} \hat{p}_s log (\hat{p}_s/f(s)) - \sum_{s \in S} p_s log (p_s/f(s)) \\
     &= \left (\sum_{s \in S} \hat{p}_s log (\hat{p}_s)  - 
     \sum_{s \in S} p_s log (p_s)\right )
     -\left (\sum_{s \in S}\hat{p}_s log (f(s))  - \sum_{s \in S} p_s log (f(s))\right)
\end{align*}
Which implies
\begin{align}\label{err_ub}
   | L (f,\hat{T}) - L (f,T) |
     &\leq \left |\sum_{s \in S} \hat{p}_s log (\hat{p}_s)  - 
     \sum_{s \in S} p_s log (p_s)\right | \notag \\
     &+\left |\sum_{s \in S}\hat{p}_s log (f(s))  - \sum_{s \in S} p_s log (f(s))\right|
\end{align}
Now
\begin{align*}
     \sum_{s \in S}\hat{p}_s log (f(s))  = \sum_{s \in \hat{T}} \hat{p}_s log (f(s)) 
     =\sum_{s \in \hat{T}} \frac{1}{|\hat{T}|} log (f(s)),
\end{align*}
and taking the expectation over all possible random training subsets $\hat{T}$ of $T$, 
\begin{align*}
    \mathbb{E}\left\{ \sum_{s \in \hat{T}} \frac{1}{|\hat{T}|} log (f(s)) \right\} = \sum_{s \in S} p_s log (f(s)) .
\end{align*}
By Assumption \ref{log_cap}, $|log(f(s))| \leq \lambda$ in the above, and hence by Hoeffding's inequality, \citet{feller1991introduction}, 
    \begin{equation*}
        \mathbb{P} \left \{   \left |\sum_{s \in \hat{T}} \frac{1}{|\hat{T}|} log (f(s))  - \sum_{s \in S} p_s log (f(s))\right|   >\zeta \right\}  \leq 2 e^{-2|\hat{T}|\zeta ^2/\lambda^2}
    \end{equation*}
Since $F:S\rightarrow Y$,  $|F|\leq |Y|^{|S|}$.  Summing over all choices for $f$, we get
\begin{equation*}
     \sum_{f\in F}    \mathbb{P} \left \{   \left |\sum_{s \in \hat{T}} \frac{1}{|\hat{T}|} log (f(s))  - \sum_{s \in S} p_s log (f(s))\right|   >\zeta \right\}  \leq 
     |Y|^{|S|}2 e^{-2|\hat{T}|\zeta ^2/\lambda^2}
    \end{equation*}
Setting $\delta \leq |Y|^{|S|}2 e^{-2|\hat{T}|\zeta ^2/\lambda^2}$, noting $|\hat{T}| = t|T|$, and treating $t, \delta, \lambda$, and $|Y|$ as constants, we get that with probability at least $(1-\delta), \;\zeta = \mathcal{O} \left [  \sqrt{\frac{|S|}{|T|}} \right]$.

Let $H = -\sum_{s \in S} p_s log (p_s)$  denote the entropy of the natural distribution.  
\begin{comment}
The variance $Var(H)$ of the observed entropy, \citet{basharin1959statistical, harris1975statistical, ricci2021estimating}, is given by 
\begin{align*}
Var(H) = \mathbb{E} \left\{ \left(-\sum_{s\in S}\hat{p}_slog(\hat{p}_s) -  H \right)^2\right\}  = \sum_{s\in S}p_s ( H + log(p_s))^2
\end{align*}
\end{comment}
By Assumption \ref{log_cap}, $|log(p_s)| \leq \lambda$ in the above, and hence the variance
$Var(H) \leq \lambda^2 $.    By the Central Limit Theorem, \citet{feller1991introduction}, for given $\delta$, with probability $(1-\delta)$,
\begin{align*}
  \eta =  \left |\sum_{s \in S} \hat{p}_s log (\hat{p}_s)  - 
     \sum_{s \in S} p_s log (p_s)\right |  
     = \mathcal{O} \left[ \sqrt{\frac{Var(H)}{|\hat{T}|}} \right ]
\end{align*}
Combining the above while noting $|\hat{T}| \approx t|T|$, and treating $t, \delta$  and $\lambda$ as constants yields $\eta = \mathcal{O} \left [  \sqrt{\frac{1}{|T|}} \right]$.

Returning to Equation \ref{err_ub}, we have that with probability at least $(1-\delta)$
\begin{align*}
   | L (f,\hat{T}) - L (f,T) |
     \leq \eta + \zeta = \mathcal{O}\left [  \sqrt{\frac{|S|}{|T|}} \right] 
\end{align*}
Let $f, g \in F$ be such that $L(f, T) = \min_{h \in F}L(h,T)$ and $L(g, \hat{T}) = \min_{h \in F}L(h,\hat{T})$. Substituting in the above, we get the excess risk 
\begin{align*}
L (g,\hat{T}) - L (f,T) \leq
    L (f,\hat{T}) - L (f,T) 
      = \mathcal{O}\left [  \sqrt{\frac{|S|}{|T|}} \right] 
\end{align*}
completing the proof of Part (a).

\textbf{Part (b):}
Let $f, g \in F$ be such that $L(f, T) = \min_{h \in F}L(h,T)$ and $L(g, \hat{T}) = \min_{h \in F}L(h,\hat{T})$
By assumption that $F$ overfits,
\begin{align*}
   L (g,T)  &\geq  L(g, \hat{T})      \\
\end{align*}
Subtracting $L(f,T) \leq L(g,T)$ from the above, we get
\begin{align*}
L(g,T) - L(f,T) &\geq L(g, \hat{T})  -L(g,T) \\
&\geq \sum_{s \in S} \hat{p}_s log (\hat{p}_s/g(s)) 
- \sum_{s \in S} p_s log (p_s/g(s)) \\
\end{align*}
Adding and subtracting $\sum_{s \in S} \hat{p}_s log (p_s)$ from the right-hand side, 
\begin{align*}
L(g,T) &- L(f,T)\\
&\geq 
\sum_{s \in S} \hat{p}_s log (\hat{p}_s/g(s)) - \sum_{s \in S} \hat{p}_s log (p_s)
- \sum_{s \in S} p_s log (p_s/g(s))  + \sum_{s \in S} \hat{p}_s log (p_s) \\
&\geq 
\sum_{s \in S} \hat{p}_s log (\hat{p}_s/p_s) 
  + \sum_{s \in S} (\hat{p}_s -p_s) log (p_s/g(s)) \\
  \end{align*}
Taking expectations on both sides over the empirical distribution $\hat{p}_s$, we get
\begin{align}\label{mu}
\mathbb{E}\left \{L(g,T) - L(f,T) \right\}
&\geq 
\mathbb{E}\left \{\sum_{s \in S} \hat{p}_s log (\hat{p}_s/p_s)\right\}
\approx \frac{|S|-1}{2|\hat{T}|}   = \Omega \left[ \frac{|S|}{|T|}\right] 
\end{align}
The approximation above, \citet{kullback1997information}, follows the "delta method" of a Taylor series in two terms for $log(\hat{p}_s/p_s)$  and thereafter substitutes the variance of the multinomial distribution in the expectation.   

By construction and Assumption \ref{log_cap}, $ L(g,T) - L(f,T) \in [0,\lambda] $ and hence $\mathbb{Var}\{L(g,T) - L(f,T)\} \leq \lambda^2$.   Therefore, combining Equation \ref{mu} with the Central Limit Theorem, for given $\delta$, with probability $(1-\delta)$, 
\begin{align*}
 \left| (L(g,T) - L(f,T) ) - \mathbb{E}\left \{L(g,T) - L(f,T) \right\} \right |
 &\leq  
 \mathcal{O}\left[\frac{\lambda} {\sqrt{\hat{|T|}}}   \right]
  \end{align*}
  Treating $\lambda$ as a constant, we get that for given $\delta$, with probability $(1-\delta)$, 
  \begin{align*}
  L(g,T) - L(f,T) ) \geq \mathbb{E}\left \{L(g,T) - L(f,T) \right\} -
 \mathcal{O}\left[\frac{1} {\sqrt{|T|}}   \right]
 \end{align*}
 Substituting Equation \ref{mu}, we get that for given $\delta$, with probability $(1-\delta)$, 
 \begin{align} \label{lower_bound}
 L(g,T) - L(f,T) ) 
  =
 \Omega \left[ \frac{|S|}{|T|}\right] 
 -\mathcal{O}\left[\frac{1} {\sqrt{|T|}}   \right]
\end{align}
This completes the proof.
\end{proof}
We now link the empirical bounds of Equation \ref{empirical_error} with the theorem above to estimate the number of unique sequences in a training corpus as a function of its length. Our analysis combines the empirical estimates of \citet{hbmbcros22} with the estimates of Theorem \ref{bi_theorem} to extract properties of the underlying natural language.

\begin{lemma}
If Equation \ref{empirical_error} holds for sequence length $l$, the number of unique sequences of length at most $l$ in a corpus of length $D$ is between $\Omega [D^{0.44}]$ and $\mathcal{O}[D^{0.72}]$.
    \end{lemma}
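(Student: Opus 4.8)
The plan is to read Theorem~\ref{bi_theorem} as two-sided control on the optimal excess risk $R^{\ast}$ and to match each side against the data-dependent term of the empirical law in Equation~\ref{empirical_error}, solving for $|S|$ in each case. First I would fix the dictionary between the two frameworks. A streaming corpus of length $D$ yields $\Theta(D)$ windows of length at most $l$ when $l$ is fixed and $D \gg l$, so $|T| = \Theta(D)$ and the ratio $|S|/|T|$ appearing in the theorem becomes $|S|/D$ up to constants. I would then isolate the data-limited error: sending $N \to \infty$ in Equation~\ref{empirical_error} removes the model term $A/N^{\alpha}$ and leaves the irreducible divergence $B/D^{\beta} = \Theta(D^{-0.28})$. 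I identify this quantity with $R^{\ast}$, noting that it is both \emph{achievable} (large empirical models attain it) and \emph{irreducible} (it is the $N \to \infty$ floor), so that $R^{\ast} = \Theta(D^{-0.28})$.

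For the lower bound on $|S|$ I would invoke part~(a), which states $R^{\ast} \leq \mathcal{O}[\sqrt{|S|/D}]$. Using the irreducibility half $R^{\ast} \geq \Omega[D^{-0.28}]$, this gives $D^{-0.28} \leq \mathcal{O}[\sqrt{|S|/D}]$; squaring and rearranging yields $|S| = \Omega[D^{\,1-2\beta}] = \Omega[D^{0.44}]$.

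For the upper bound I would invoke part~(b), which applies because the empirical regime is explicitly one of overfitting: every algorithm, and hence the optimum, obeys $R^{\ast} \geq \Omega[|S|/D] - \mathcal{O}[D^{-1/2}]$. Using the achievability half $R^{\ast} \leq \mathcal{O}[D^{-0.28}]$, I obtain $\Omega[|S|/D] \leq \mathcal{O}[D^{-0.28}] + \mathcal{O}[D^{-1/2}]$. Since $\beta = 0.28 < 1/2$, the correction $\mathcal{O}[D^{-1/2}]$ is asymptotically dominated by $D^{-0.28}$ and can be absorbed, giving $|S| = \mathcal{O}[D^{\,1-\beta}] = \mathcal{O}[D^{0.72}]$. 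Together the two estimates establish $\Omega[D^{0.44}] \leq |S| \leq \mathcal{O}[D^{0.72}]$.

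The main obstacle is conceptual rather than computational. The whole argument rests on the identification $R^{\ast} = \Theta(D^{-0.28})$: part~(a) leans on the lower (irreducibility) half while part~(b) leans on the upper (achievability) half, so both halves must hold with the same exponent, and only the hidden $\mathcal{O}$ and $\Omega$ prefactors may absorb any gap. Justifying that the large-model limit of Equation~\ref{empirical_error} is genuinely the optimal data-limited excess risk, and confirming that $|T| = \Theta(D)$ survives the passage from a streaming corpus to the collection-of-sequences formalism of Section~\ref{empirical_scaling}, is where the care lies; once the dictionary is fixed, the two exponent computations fall out immediately.
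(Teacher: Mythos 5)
Your proposal follows essentially the same route as the paper's proof: identify $|T|=\Theta(D)$ for fixed $l$, send $N\to\infty$ in Equation~\ref{empirical_error} so that the data term $B/D^{\beta}$ is the residual divergence (with $\min_{h\in F}L(h,T)\to 0$ by universal approximation), then play this against part~(a) of Theorem~\ref{bi_theorem} to obtain $|S|=\Omega[D^{1-2\beta}]=\Omega[D^{0.44}]$ and against part~(b), justified by overfitting, to obtain $|S|=\mathcal{O}[D^{1-\beta}]=\mathcal{O}[D^{0.72}]$. Two of your touches are in fact slight refinements of the paper's presentation: you absorb the $\mathcal{O}[D^{-1/2}]$ correction in part~(b) directly by noting $\beta=0.28<1/2$, whereas the paper instead assumes $|S|$ is asymptotically larger than $\sqrt{D}$ and only verifies non-contradiction afterward; and you make explicit the two-sided identification of $B/D^{\beta}$ with the optimal data-limited excess risk (achievable for the upper bound on $|S|$, irreducible for the lower bound), which the paper leaves implicit in writing $\Delta=\min_{h\in F}L(h,T)+\mathcal{O}[\sqrt{|S|/|T|}]$ as an equality.
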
 
\begin{proof}
    
Consider a transformer with $N$ parameters as in Section \ref{empirical_scaling},  computing a class of functions $F:S\rightarrow Y$. Suppose that after training on a corpus of length $D$ yielding a collection $T$ of $lD$ sequences of length at most $l$ tokens, the transformer computes a distribution $f \in F$.  From Equation \ref{Delta} and the upper bound of Theorem \ref{bi_theorem} and treating $l$ as a constant, we have
\begin{align*}
\Delta = \sum_{s\in S} p_slog(p_s/f(s)) = 
    \min_{h\in F} L (h,T) + \mathcal{O} \left [  \sqrt{\frac{|S|}{|T|}} \right] 
    = \min_{h\in F} L (h,T) + \mathcal{O} \left [  \sqrt{\frac{|S|}{D}} \right]
\end{align*}

Combining Equations \ref{empirical_error} with the  above, we get
\begin{align*}
  \frac{A}{N^{\alpha}}+ \frac{B}{D^{\beta}} 
= \min_{h\in F} L (h,T) + \mathcal{O} \left [  \sqrt{\frac{|S|}{D}} \right]
\end{align*}

As the number of parameters $N \rightarrow \infty$, $A/N \rightarrow 0$,  and since transformers are universal approximators, $\min_{h\in F}L(h,T) \rightarrow 0$, yielding
\begin{equation*}
    \frac{B}{D^{\beta}} = \mathcal{O} \left [  \sqrt{\frac{|S|}{D}} \right]
\end{equation*}

Substituting $\beta =  0.28$ from Section \ref{empirical_scaling} in the above,  we get
\begin{equation*}
    |S| = \Omega [ D^{1-2\beta}] = \Omega [D^{0.44}].
\end{equation*}
Next, we note that the LLMs of Section \ref{empirical_scaling} overfit and hence satisy the condition of Theorem \ref{bi_theorem}.  Furthermore, we assume that  $|S|$ is asymptotically larger than $\sqrt{D}$, implying that $\Omega \left[ \frac{|S|}{|T|}\right] $ is the dominant term in the lower bound of Theorem \ref{bi_theorem}.   We verify that the assumption is not contradicted when we combine Equations \ref{Delta} with the lower bound of Theorem \ref{bi_theorem} as below.  
\begin{align*}
    \Delta =\sum_{s\in S} p_slog(p_s/f(s)) = \min_{h\in F} L (h,T) + \Omega \left [\frac{|S|}{|T|} \right]
    = \min_{h\in F} L (h,T) + \Omega \left [\frac{|S|}{D} \right]
\end{align*}
Combining Equations \ref{empirical_error} with the  above, we get
\begin{equation*}
   \frac{A}{N^{\alpha}}+ \frac{B}{D^{\beta}} = 
\min_{h \in F} L (h,T) + \Omega \left [\frac{|S|}{D} \right]
\end{equation*}

As the number of parameters $N \rightarrow \infty$, $A/N \rightarrow 0$,  and since transformers are universal approximators, $\min_{h\in F}L(h,T) \rightarrow 0$, yielding
\begin{equation*}
  \frac{B}{D^{\beta}} = \Omega \left [\frac{|S|}{D} \right]
\end{equation*}
Substituting $\beta = 0.28$ from Equation \ref{empirical_error} in the above,  we get
\begin{equation*}
    |S| = \mathcal{O} [D^{1-\beta}] = \mathcal{O} [D^{0.72}].
\end{equation*}

The above does not contradict our assumption that $|S|$ is asymptotically larger than $\sqrt {D}$, completing the proof.

\end{proof}

From a learning theoretic perspective, the number of unique sequences as a function of the length of a training corpus is an upper bound on the degrees of freedom required to learn any natural dataset of that length.    Theoretically, the number of parameters in an efficient LLM would scale linearly in the number of unique sequences; e.g., a "brute-force" code book of one entry per unique sequence.    The so-called AI scaling law for the transformer architecture suggests that the number of parameters must scale linearly with the length of the data, e.g., \citet{kmhbccgrwa20}, \citet{hbmbcros22}, \citet{muennighoff2023scaling}.  In contrast, our main result implies that the number of parameters of an efficient LLM need only scale as $D^{\gamma}, \gamma \in [0.44,0.72]$, suggesting the existence of substantially more efficient LLM architectures.  

Our result also has implications for so-called emergent abilities i.e. \textit{“abilities that are not present in smaller-scale models but are present in large-scale models; thus they cannot be predicted by simply extrapolating the performance improvements on smaller-scale models”} e.g., \citet{wei2022emergent}, \citet{ganguli2022predictability}, \citet{arora2023theory} and \citet{srivastava2023beyond}.  Conversely, \citet{schaeffer2023emergent} find that LLMs may not exhibit new abilities simply by increasing the number of parameters. Our result implies that if the number of parameters of an LLM is smaller than the number of unique sequences in the training corpus, scaling up the LLM can uncover emergent skills.  

\section{Recurrent Transformers}

With our theoretical results as backdrop, we propose an architecture that converts any transformer into a recurrent network in order to combine the efficacy of the attention mechanism in the transformer, with the efficiency of recurrent networks.  Specifically, let $X=(x_1,...x_i, ...x_t)$ be an input sequence, where each $x_i$ is a block of vectors embedding the corresponding sequence of non-overlapping and contiguous input tokens $w_{i,1}, w_{i,2}, ...w_{i,k}$, with $k\geq 1$ being the block size. Let $\tau$ denote the function computed by a regular transformer, e.g., \citet{VSPUJGK17}, in that on input sequence $X$ the transformer outputs $\tau(X) = (y_1, y_2, ...y_t)$.  We assume that the $x_i$ and $y_i$ consist of vectors of the same dimension, and that $\tau$ preserves causality in that $y_i$ does not depend on $x_{i+1}, x_{i+2}...x_t$. The recurrent transformer based on $\tau$ is as follows, where the comma between two sequences concatenates them:
\begin{align}
h_1 &= \tau(x_1)		\label{init}		\\
(y_i , h_{i+1}) &= \tau(\alpha h_{i-1}+h_i  , \alpha h_{i-1}+ x_{i+1}) \label{recur},\:\: \alpha \in [0,1]	\\
y_t &= \tau(h_t)				\label{close}
\end{align}

In the above, $h_i$ and $y_i$ are respectively the hidden state and output of the recurrent transformer at position $i$.  Equation \ref{init} initializes the recurrence at position 1, Equation \ref{recur} iteratively applies the transformer function with causal attention between position $i+1$ and position $i$, and Equation \ref{close} closes the recurrence at the last position.  The trainable accumulation parameter $\alpha$ serves to accumulate a weighted portion of the hidden state $h_{i-1}$  into the working states of the network. Intuitively, $\alpha=1$ favors memorization, while $\alpha=0$ favors generalization.  As we will see in our experiments, $\alpha \approx 0.5$ aids in long-range tasks.  While the above architecture can support multiple layers stacked successively, we found no benefit to doing so in our experiments.   Furthermore, we found that a small block size of $k\approx 32$ solved even long range tasks with ease.

\subsection{Computational Complexity}\label{complexity}
Let $r, d$ and $l$ be the number of layers, the embedding dimension, and the sequence length respectively of a regular transformer and $k$ the block size of a recurrent transformer.    We note that the computational complexity of the regular transformer is $r(l^2d+ld^2)$, from which it is easy to see that of the recurrent transformer is $r(4dlk+2ld^2)$.  

\section{Experimental Results}

We now test the performance of recurrent transformers on several datasets. The goal of our  experiments is to test whether smaller single-layer recurrent transformers can match the performance of larger multi-layer transformers at lower computational cost. Owing to budget constraints, all of our experiments were run on an M4 Mac Mini with 16GB memory.   All of our experiments use positional encoding of tokens and the AdamW optimizer.

\subsection{Long Range Image Classification}

We compare the performance of the recurrent transformer against regular transformers on the “Long Range Arena” image classification dataset of \citet{tay2020long}:

\textit{“This task is an image classification task, where the inputs are sequences of pixels. In other words, an $N \times N$ image is flattened to a sequence of length $N^2$ pixels. …To simplify the setup, we map the input images to a single gray-scale channel where each pixel is represented with an 8-bit pixel intensity (vocabulary size of 256). In LRA, we use the CIFAR-10 dataset...”}

Specifically, the CIFAR-10 dataset, \citet{krizhevsky2009learning}, comprises a training set of 50,000  images of $32\times 32$ pixels, and 10,000 test images across 10 categories.  For this task, the images are converted into 256 grayscale values and fed as an input sequence of 1024 tokens across a vocabulary of 256.   The output is a single token across a vocabulary of 10 items representing the classification label of the images. Table \ref{tab:cifar} specifies the base transformer layer in our comparison.

\begin{table}
    \centering
    \small
    \begin{tabular}{|c|c|c|c|}\hline
         $l$&  $d$&  Heads&  Dropout \\\hline
         1024&  32&  4&  0.05\\ \hline
    \end{tabular}
    \caption{Base Transformer Layer for Long Range Image Classification}
    \label{tab:cifar}
\end{table}

\begin{table}
    \centering
      \small
    \begin{tabular}{|c|c|c|c|c|c|l|}\hline
         Model&   Layers &Batch size&Parameters&  Ops/sample&  Val Loss&Val Acc. \%\\\hline
 Regular Transformer& 4& 40& 9.1E4& 1.38E8& 1.68 (0.02)&41.2 (0.7)\\\hline
         Regular Transformer&   1 &40&5.3E4&  3.46E7&  1.75 (0.02)&39.3 (0.8)\\\hline
         Recurrent Transformer&   1 &120&5.3E4&  6.29E6&  1.50 (0.01)&47.3 (0.4)\\ \hline 
    \end{tabular}
    \caption{Experimental Results for Long Range Image Classification}
    \label{tab:cifar_results}
\end{table}

Table \ref{tab:cifar_results} shows the results of our experiments for this dataset, comparing a 4-layer and 1-layer regular transformer against a recurrent transformer with block size of $32$.  The  "Ops/sample" column reflects our computational complexity estimates of  section \ref{complexity}.  The Validation Loss reported is the average across 100 samples and in parentheses, the corresponding standard error, e.g.,  1.5 (0.01) implies that with 95\% confidence, the validation loss is within $1.5\pm$ 0.02. All three models were run for 600,000 samples, and the recurrent transformer could handle larger batch sizes than the 4-layer transformer in the available memory.    The recurrent transformer outperforms on loss and accuracy, requiring less than $5\%$ of the compute of the 4-layer regular transformer.  Of interest, the parameter $\alpha$ of Equation \ref{recur} converges rapidly to $\approx 0.45$ during training, suggesting accumulation of history aids in solving this task.  Figure \ref{fig:Long_range_image} shows the loss and accuracy during training.

\begin{figure}[h!]
    \centering
    \includegraphics[width=0.45\linewidth]{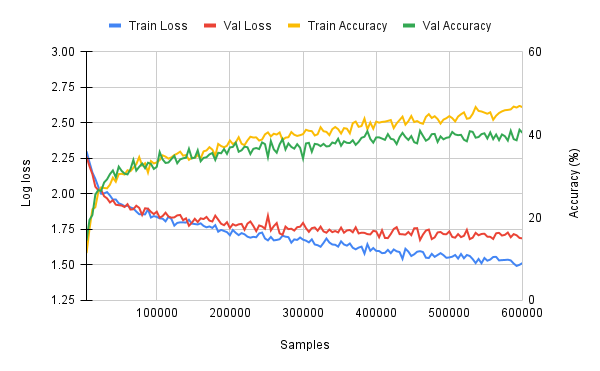}
    \includegraphics[width=0.45\linewidth]{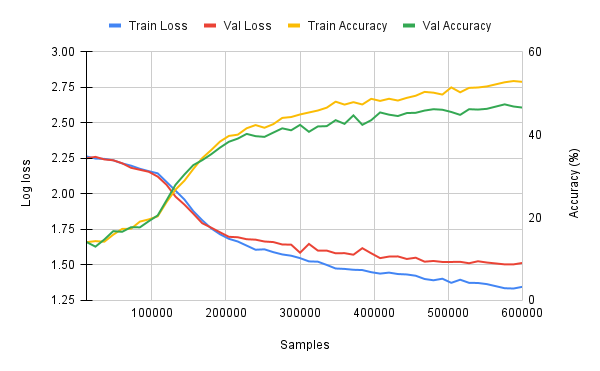}
    \caption{Long Range Image Classification: 4-layer regular (left) \& recurrent transformer (right)}
    \label{fig:Long_range_image}
\end{figure}
\subsection{Copy and Selective Copy with Curriculum Training}

We test the performance of the recurrent transformer on the copy and selective copy tasks  of \citet{gu2023mamba}.   The copy task involves 16 tokens, followed by 4096 noise tokens, followed by 16 tokens to trigger recall of the first 16 tokens, for a total sequence of 4128 tokens. The tokens are drawn from an alphabet of 16 unique characters, of which 2 characters are reserved for the noise and recall tokens respectively. Successful copy would reproduce the first 16 tokens in the original order against the last 16 recall tokens.  The selective copy task randomly intersperses the 16 tokens to be copied amongst 4096 noise tokens, followed by 16 tokens to trigger recall of the 16 interspersed tokens, for a total sequence of 4128 tokens..  Successful copy would reproduce the 16 interspersed tokens in their original order against the last 16 recall tokens.  

It is known that copy tasks are easy for transformers, but challenging for recurrent networks, requiring $\approx 100M$ parameters, e.g., \citet{gu2023mamba}, \citet{ren2024exploring}.  In our experiments, the surprisingly small recurrent transformer of Table \ref{tab:copy} with 0.5M parameters solves the tasks, with block size $32$.
\begin{table}
    \centering
    \small
    \begin{tabular}{|c|c|c|c|}\hline
         $l$&  $d$&  Heads&  Dropout \\\hline
         4128&  96&  6&  0.05\\ \hline
    \end{tabular}
    \caption{Base Transformer Layer for Copy \& Selective Copy with $\approx 0.5M$ parameters}
    \label{tab:copy}
\end{table}
Training the recurrent transformer on the copy tasks is challenging due to vanishing gradients, e.g.,  \citet{hochreiter1991untersuchungen}.  We overcome this challenge with curriculum training, e.g., \citet{sodhani2020toward}.   Specifically, the training process begins with say, 128 noise tokens, trains until the validation loss drops down to say 0.3, then doubles the number of noise tokens.  This process is repeated until the number of noise tokens reaches 4096, at which point the training process drives the validation loss down to the desired minimal level for completion. During training, the learning rate decays as a function of the sequence length and the validation loss.   

Table \ref{tab:copy_results} shows the results of our experiments. The Ops/sample reported in the table is when each sample has 4096 noise tokens, which is the case during inference. The  Validation  Loss and Validation Accuracy are averaged over 100 samples and the numbers in parentheses are the respective standard error.  For comparison, the estimated Ops/sample is $\approx 1\%$ of a regular transformer with four layers of Table \ref{tab:copy}.  The values of the accumulation parameter $\alpha$ in the table suggest a balance between forget and accumulate to solve this task. Figure \ref{fig:sel_copy}  shows the loss and accuracy for the selective copy task during the training process. The spikes in the figures are the result of the shocks introduced by doubling the number of noise tokens during curriculum training.   The validation loss and accuracy are better than the corresponding training loss and accuracy due to the dropout during training.

\begin{table}
    \centering
  \small  \begin{tabular}{|c|c|c|l|c|c|}\hline
         Task&  Batch Size&  Ops/sample&   $\alpha$&Val Loss&  Val Acc. \%\\\hline
         Copy&  64&  2.5E7&   0.60&0.000 (3E-7)&  100 (0.00)\\\hline
         Selective Copy&  64&  2.5E7&   0.50&0.036 (3E-3)&  99.4 (0.04)\\ \hline
    \end{tabular}
    \caption{ Experimental results for the recurrent transformer on Copy \& Selective Copy}
    \label{tab:copy_results}
\end{table}

\begin{figure}[h!]
    \centering
    \includegraphics[width=0.45\linewidth]{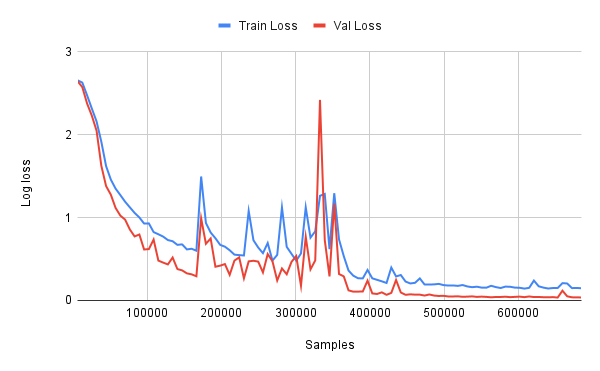}
    \includegraphics[width=0.45\linewidth]{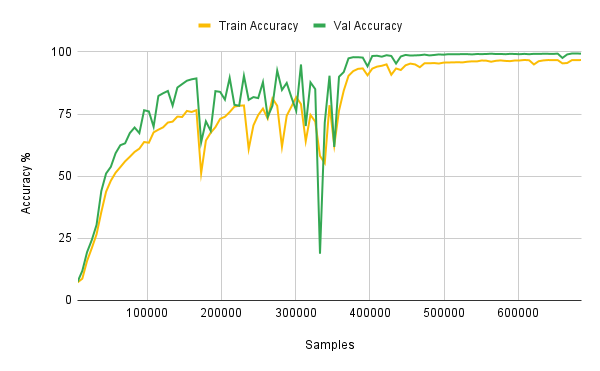}
    \caption{Curriculum training of recurrent transformer for Selective Copy}
    \label{fig:sel_copy}
\end{figure}

\subsection{Natural Language Processing}
We compare the performance of the recurrent transformer against regular transformers on the nanoGPT Shakespeare model and dataset of \citet{karpathy2023}.  Table \ref{tab:shakespeare} specifies the base transformer layer in our comparison. The recurrent transformer has a block size $16$. Table \ref{tab:shakespeare_results} shows the results of our experiments comparing the performance of a 1-layer and 6-layer regular transformer operating on the full sequence length, against a 1-layer recurrent transformer.   The  Validation  Loss is averaged over 40 samples and the number in parentheses is the respective standard error.  The Ops/sample entries reflect the computational complexity estimates of Section \ref{complexity}.   Figure \ref{fig:shakespeare} shows the Training Loss and Validation Loss for our experiments against the total training cost per Section \ref{complexity}.  Although a fifth of the size, the recurrent transformer matches the performance of the multi-layer regular transformer in this experiment, requiring more samples to train but at a lower cost per sample, and therefore about the same total training cost. However, the recurrent transformer requires only about $20\%$ the inference cost of the multi-layer regular transformer.   Of interest, the parameter $\alpha$ of Equation \ref{recur} converges rapidly to zero during training.  Owing to budget constraints, we are unable to test larger models.

\begin{table}
    \centering
    \small
    \begin{tabular}{|c|c|c|c|}\hline
         $l$&  $d$&  Heads&  Dropout \\\hline
         256&  384&  6&  0.2 \\ \hline
    \end{tabular}
    \caption{Base Transformer Layer for Shakespeare LLM}
    \label{tab:shakespeare}
\end{table}

\begin{table}
    \centering
      \small
    \begin{tabular}{|c|c|c|c|c|c|}\hline
         Model&   Layers &Batch size&Parameters&  Ops/sample&  Val Loss \\\hline
 Regular Transformer& 6 & 64& 11E6& 3.8E8& 1.47 (0.01) \\\hline
         Regular Transformer&   1 &64&1.E6&  6.3E7&  1.58 (0.01) \\\hline
         Recurrent Transformer&   1 &64&1.9E6&  8.2E7&  1.47 (0.01) \\ \hline
    \end{tabular}
    \caption{Experimental Results for Shakespeare LLM}
    \label{tab:shakespeare_results}
\end{table}

\begin{figure}[h!]
    \centering
    \includegraphics[width=0.45\linewidth]{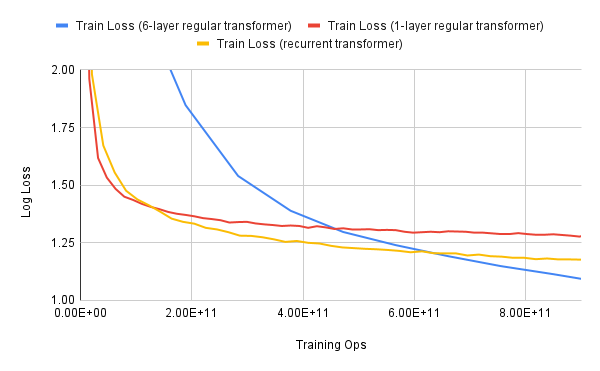}
    \includegraphics[width=0.45\linewidth]{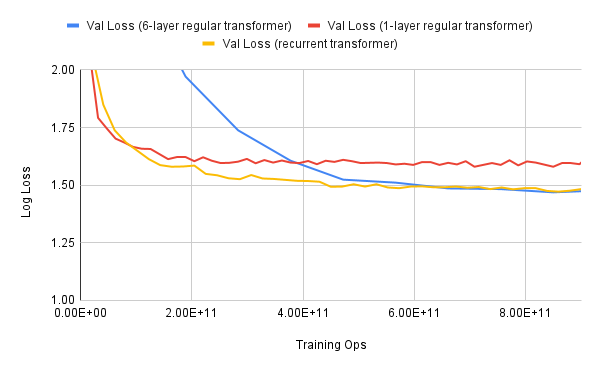}
    \caption{Shakespeare LLM:  Training loss (left) and Validation Loss (right)}
    \label{fig:shakespeare}
\end{figure}

\section{Summary}
Recent LLMs have hundreds of billions of parameters consuming vast resources. Furthermore, the so-called "AI scaling law" for transformers suggests that the number of parameters must scale linearly with the length of the data.  In response, we inquire into efficient LLMs, i.e. those with the fewest parameters that achieve the desired accuracy on a training corpus. Specifically, by comparing theoretical and empirical estimates of the Kullback-Leibler divergence, we derive a natural AI scaling law that the number of parameters in an efficient LLM scales as $D^{\gamma}, \gamma \in [0.44,0.72]$ where $D$ is the length of the training data, suggesting the existence of more efficient architectures.  Against this backdrop, we propose recurrent transformers, combining the efficacy of transformers with the efficiency of recurrent networks, progressively applying a single transformer layer to a fixed-width sliding window across the input sequence. Recurrent transformers (a) run in linear time in the sequence length, (b) are memory-efficient and amenable to parallel processing in large batches, (c) learn to forget history for language tasks, or accumulate history for long range tasks like copy and selective copy, and (d) are amenable to curriculum training to overcome vanishing gradients. In our experiments, we find that recurrent transformers perform favorably on benchmark tests.   

\section{Reproducibility}

All code available on github.

\bibliography{bibtex}

@article{VSPUJGK17,
  author       = {Ashish Vaswani and
                  Noam Shazeer and
                  Niki Parmar and
                  Jakob Uszkoreit and
                  Llion Jones and
                  Aidan N. Gomez and
                  Lukasz Kaiser and
                  Illia Polosukhin},
  title        = {Attention Is All You Need},
  year         = {2017},
  journal       = {arXiv: 1706.03762}
}

@book{kullback1997information,
  title={Information theory and statistics},
  author={Kullback, Solomon},
  year={1997},
  publisher={Courier Corporation}
}

@incollection{akaike1998information,
  title={Information theory and an extension of the maximum likelihood principle},
  author={Akaike, Hirotogu},
  booktitle={Selected papers of hirotugu akaike},
  pages={199--213},
  year={1998},
  publisher={Springer}
}

@article{hochreiter1991untersuchungen,
  title={Untersuchungen zu dynamischen neuronalen Netzen},
  author={Hochreiter, Sepp},
  journal={Diploma, Technische Universit{\"a}t M{\"u}nchen},
  volume={91},
  number={1},
  pages={31},
  year={1991}
}

@article{kullback1951information,
  title={On information and sufficiency},
  author={Kullback, Solomon and Leibler, Richard A},
  journal={The annals of mathematical statistics},
  volume={22},
  number={1},
  pages={79--86},
  year={1951},
  publisher={JSTOR}
}

@article{muennighoff2023scaling,
  title={Scaling data-constrained language models},
  author={Muennighoff, Niklas and Rush, Alexander and Barak, Boaz and Le Scao, Teven and Tazi, Nouamane and Piktus, Aleksandra and Pyysalo, Sampo and Wolf, Thomas and Raffel, Colin A},
  journal={Advances in Neural Information Processing Systems},
  volume={36},
  pages={50358--50376},
  year={2023}
}

@article{elman1990finding,
  title={Finding structure in time},
  author={Elman, Jeffrey L},
  journal={Cognitive science},
  volume={14},
  number={2},
  pages={179--211},
  year={1990},
  publisher={Wiley Online Library}
}

@article{hochreiter1997long,
  title={Long short-term memory},
  author={Hochreiter, Sepp and Schmidhuber, J{\"u}rgen},
  journal={Neural computation},
  volume={9},
  number={8},
  pages={1735--1780},
  year={1997},
  publisher={MIT press}
}

@article{cho2014learning,
  title={Learning phrase representations using RNN encoder-decoder for statistical machine translation},
  author={Cho, Kyunghyun and Van Merri{\"e}nboer, Bart and Gulcehre, Caglar and Bahdanau, Dzmitry and Bougares, Fethi and Schwenk, Holger and Bengio, Yoshua},
  journal={arXiv preprint arXiv:1406.1078},
  year={2014}
}

@article{gu2021efficiently,
  title={Efficiently modeling long sequences with structured state spaces},
  author={Gu, Albert and Goel, Karan and R{\'e}, Christopher},
  journal={arXiv preprint arXiv:2111.00396},
  year={2021}
}

@article{feng2024were,
  title={Were rnns all we needed?},
  author={Feng, Leo and Tung, Frederick and Ahmed, Mohamed Osama and Bengio, Yoshua and Hajimirsadeghi, Hossein},
  journal={arXiv preprint arXiv:2410.01201},
  year={2024}
}

@article{tay2020long,
  title={Long range arena: A benchmark for efficient transformers},
  author={Tay, Yi and Dehghani, Mostafa and Abnar, Samira and Shen, Yikang and Bahri, Dara and Pham, Philip and Rao, Jinfeng and Yang, Liu and Ruder, Sebastian and Metzler, Donald},
  journal={arXiv preprint arXiv:2011.04006},
  year={2020}
}

@article{krizhevsky2009learning,
  title={Learning multiple layers of features from tiny images},
  author={Krizhevsky, Alex and Hinton, Geoffrey and others},
  year={2009},
  publisher={Toronto, ON, Canada}
}

@article{gu2023mamba,
  title={Mamba: Linear-time sequence modeling with selective state spaces},
  author={Gu, Albert and Dao, Tri},
  journal={arXiv preprint arXiv:2312.00752},
  year={2023}
}

@article{ren2024exploring,
  title={Exploring the Limitations of Mamba in COPY and CoT Reasoning},
  author={Ren, Ruifeng and Li, Zhicong and Liu, Yong},
  journal={arXiv preprint arXiv:2410.03810},
  year={2024}
}

@article{hutchins2022block,
  title={Block-Recurrent Transformers},
  author={Hutchins, DeLesley and Schlag, Imanol and Wu, Yuhuai and Dyer, Ethan and Neyshabur, Behnam},
  journal={arXiv preprint arXiv:2203.07852},
  year={2022}
}

@article{sodhani2020toward,
  title={Toward training recurrent neural networks for lifelong learning},
  author={Sodhani, Shagun and Chandar, Sarath and Bengio, Yoshua},
  journal={Neural computation},
  volume={32},
  number={1},
  pages={1--35},
  year={2020},
  publisher={MIT Press One Rogers Street, Cambridge, MA 02142-1209, USA journals-info~…}
}

@article{karpathy2023,
  title={nanoGPT},
  author={Karpathy, A.},
  year={2023},
 journal={github},
  URL={https://github.com/karpathy/nanoGPT/blob/master/README.md}
}

@article{wei2022emergent,
  title={Emergent abilities of large language models},
  author={Wei, Jason and Tay, Yi and Bommasani, Rishi and Raffel, Colin and Zoph, Barret and Borgeaud, Sebastian and Yogatama, Dani and Bosma, Maarten and Zhou, Denny and Metzler, Donald and others},
  journal={arXiv preprint arXiv:2206.07682},
  year={2022}
}

@article{arora2023theory,
  title={A theory for emergence of complex skills in language models},
  author={Arora, Sanjeev and Goyal, Anirudh},
  journal={arXiv preprint arXiv:2307.15936},
  year={2023}
}

@inproceedings{ganguli2022predictability,
  title={Predictability and surprise in large generative models},
  author={Ganguli, Deep and Hernandez, Danny and Lovitt, Liane and Askell, Amanda and Bai, Yuntao and Chen, Anna and Conerly, Tom and Dassarma, Nova and Drain, Dawn and Elhage, Nelson and others},
  booktitle={Proceedings of the 2022 ACM conference on fairness, accountability, and transparency},
  pages={1747--1764},
  year={2022}
}

@article{srivastava2023beyond,
  title={Beyond the imitation game: Quantifying and extrapolating the capabilities of language models},
  author={Srivastava, Aarohi and Rastogi, Abhinav and Rao, Abhishek and Shoeb, Abu Awal and Abid, Abubakar and Fisch, Adam and Brown, Adam R and Santoro, Adam and Gupta, Aditya and Garriga-Alonso, Adri and others},
  journal={Transactions on machine learning research},
  year={2023}
}

@article{schaeffer2023emergent,
  title={Are emergent abilities of large language models a mirage?},
  author={Schaeffer, Rylan and Miranda, Brando and Koyejo, Sanmi},
  journal={Advances in neural information processing systems},
  volume={36},
  pages={55565--55581},
  year={2023}
}

@article{shannon1948mathematical,
  title={A mathematical theory of communication},
  author={Shannon, Claude E},
  journal={The Bell system technical journal},
  volume={27},
  number={3},
  pages={379--423},
  year={1948},
  publisher={Nokia Bell Labs}
}

@book{cover1999elements,
  title={Elements of information theory},
  author={Cover, Thomas M},
  year={1999},
  publisher={John Wiley \& Sons}
}

@article{bmrskdoa20,
  author = {Brown, T. and Mann, B. and Ryder, N. and Subbiah, M. and Kaplan, J. D. and Dhariwal, P. and others and Amodei, D.},
  year = {2020},
  title = {Language models are few-shot learners},
  journal = {Advances in neural information processing systems},
  volume = {33},
  pages = {1877-1901},
}

@article{rbcmhsoi21,
  author = {Rae, J. W. and Borgeaud, S. and Cai, T. and Millican, K. and Hoffmann, J. and Song, F. and others and Irving, G.},
  title = {Scaling language models: Methods, analysis \& insights from training gopher},
  year = {2021},
  journal = {arXiv:2112.11446},
}

@article{spnlrcoc22,
  author = {Smith, S. and Patwary, M. and Norick, B. and LeGresley, P. and Rajbhandari, S. and Casper, J. and others and Catanzaro, B.},
  title = {Using deepspeed and megatron to train megatron-turing nlg 530b, a large-scale generative language model},
  year = {2022},
  journal = {arXiv:2201.11990},
}

@article{tdhskcol22,
  author = {Thoppilan, R. and De Freitas, D. and Hall, J. and Shazeer, N. and Kulshreshtha, A. and Cheng, H. T. and others and Le, Q.},
  title = {Lamda: Language models for dialog applications},
  year = {2022},
  journal = {arXiv:2201.08239},
}

@article{kmhbccgrwa20,
  author = {Kaplan, Jared and Sam McCandlish and Tom Henighan and Tom B. Brown and Benjamin Chess and Rewon Child and Scott Gray and Alec Radford and Jeffrey Wu and Dario Amodei},
  title = {Scaling laws for neural language models},
  year = {2020},
  journal = {arXiv:2001.08361}
}

@article{hbmbcros22,
  author = {Hoffmann, J. and Borgeaud, S. and Mensch, A. and Buchatskaya, E. and Cai, T. and Rutherford, E. and others and Sifre, L.},
  title = {Training compute-optimal large language models},
  year = {2022},
  journal = {arXiv:2203.15556},
}

@article{xfzzy23,
  author = {Xue, F. and Fu, Y. and Zhou, W. and Zheng, Z. and You, Y.},
  title = {To Repeat or Not To Repeat: Insights from Scaling LLM under Token-Crisis},
  year = {2023},
  note = {arXiv preprint},
  archiveprefix = {arXiv},
  journal = {arXiv:2305.13230},
}

@article{v84,
  author = {Valiant, L. G.},
  year = {1984},
  title = {A theory of the learnable},
  journal = {Communications of the ACM},
  volume = {27},
  number = {11},
  pages = {1134-1142},
}

@article{n89,
  author = {Natarajan, B. K.},
  year = {1989},
  title = {On learning sets and functions},
  journal = {Machine Learning},
  volume = {4},
  pages = {67-97},
}

@book{sb14,
  author = {Shalev-Shwartz, S. and Ben-David, S.},
  year = {2014},
  title = {Understanding machine learning},
  subtitle = {From theory to algorithms},
  address = {Cambridge},
  publisher = {university press},
}

@inproceedings{dai2019transformer,
  title={Transformer-xl: Attentive language models beyond a fixed-length context},
  author={Dai, Zihang and Yang, Zhilin and Yang, Yiming and Carbonell, Jaime G and Le, Quoc and Salakhutdinov, Ruslan},
  booktitle={Proceedings of the 57th annual meeting of the association for computational linguistics},
  pages={2978--2988},
  year={2019}
}

@article{basharin1959statistical,
  title={On a statistical estimate for the entropy of a sequence of independent random variables},
  author={Basharin, Georgij P},
  journal={Theory of Probability \& Its Applications},
  volume={4},
  number={3},
  pages={333--336},
  year={1959},
  publisher={SIAM}
}

@techreport{harris1975statistical,
  title={The statistical estimation of entropy in the non-parametric case},
  author={Harris, Bernard},
  year={1975},
    institution = {University of Wisconsin, Madison}
}

@article{ricci2021estimating,
  title={Estimating the variance of Shannon entropy},
  author={Ricci, Leonardo and Perinelli, Alessio and Castelluzzo, Michele},
  journal={Physical Review E},
  volume={104},
  number={2},
  pages={024220},
  year={2021},
  publisher={APS}
}

@book{feller1991introduction,
  title={An introduction to probability theory and its applications, Volume 2},
  author={Feller, William},
  volume={2},
  year={1991},
  publisher={John Wiley \& Sons}
}
\bibliographystyle{iclr2026_conference}
\end{document}